\date{}
\newcommand{\cB}{\mathcal{B}}
\newcommand{\cH}{\mathcal{H}}
\newcommand{\cfrakR}{\mathfrak{R}} 
\newcommand{\cS}{\mathcal{S}}
\newcommand{\cT}{\mathcal{T}}
\newcommand{\bv}{\mathbf{v}}
\newcommand{\bx}{\mathbf{x}}
\newcommand{\by}{\mathbf{y}}
\newcommand{\bz}{\mathbf{z}}
\newcommand{\bw}{\mathbf{w}}
\newcommand{\N}{{\rm I}\kern-0.18em{\rm N}}
\newcommand{\R}{{\rm I}\kern-0.18em{\rm R}}
\newcommand{\h}{{\rm I}\kern-0.18em{\rm H}}
\newcommand{\K}{{\rm I}\kern-0.18em{\rm K}}
\newcommand{\p}{{\rm I}\kern-0.18em{\rm P}}
\newcommand{\E}{{\rm I}\kern-0.18em{\rm E}}
\newcommand{\Z}{{\rm Z}\kern-0.18em{\rm Z}}
\newcommand{\1}{{\rm 1}\kern-0.25em{\rm I}}
\newcommand{\pn}{\p_{\kern-0.25em n}}
\newcommand{\pnm}{\p_{\kern-0.25em n,m}}
\newcommand{\psubm}{\p_{\kern-0.25em m}}
\newcommand{\BigO}[1]{{\operatorname{O}}}
\DeclareMathOperator*{\argmin}{arg\,min}
\newtheorem{MyDefinition}{Definition}
\newtheorem{MyTheorem}{Theorem}
\newtheorem{MyExample}{Example} 
\begin{document}
\title{An Empirical Study on Regularization of Deep Neural Networks by Local Rademacher Complexity}
\author{
Yingzhen Yang, Jiahui Yu, Xingjian Li, Jun Huan, Thomas S. Huang\\
Baidu Research, University of Illinois at Urbana-Champaign\\
\scriptsize \texttt{superyyzg@gmail.com, jyu79@illinois.edu, lixingjian@baidu.com, huanjun@baidu.com, huang@ifp.uiuc.edu}\\
}

\maketitle
\thispagestyle{empty}

\begin{abstract}
Regularization of Deep Neural Networks (DNNs) for the sake of improving their generalization capability is important and challenging. The development in this line benefits theoretical foundation of DNNs and promotes their usability in different areas of artificial intelligence. In this paper, we investigate the role of Rademacher complexity in improving generalization of DNNs and propose a novel regularizer rooted in Local Rademacher Complexity (LRC). While Rademacher complexity is well known as a distribution-free complexity measure of function class that help boost generalization of statistical learning methods, extensive study shows that LRC, its counterpart focusing on a restricted function class, leads to sharper convergence rates and potential better generalization given finite training sample. Our LRC based regularizer is developed by estimating the complexity of the function class centered at the minimizer of the empirical loss of DNNs. Experiments on various types of network architecture demonstrate the effectiveness of LRC regularization in improving generalization. Moreover, our method features the state-of-the-art result on the CIFAR-$10$ dataset with network architecture found by neural architecture search.
\end{abstract}
\section{Introduction}
Regularization on suitable function class is of great interest to statistical machine learning methods on various machine learning and pattern recognition problems, and it proves to improve generalization. Since the computation of the exact generalization error involves data distribution which is always unknown, most of the efforts are focused on generalization error bound. Such bound is typically comprised of two terms: an empirical loss evaluated on the finite training data and a regularization term that reflects the complexity of the function class that the classifier in question belongs to. Below is an example of error bound of the said form for kernel support vector machines \cite{Bartlett2003}.
\begin{MyExample}\label{example::generalization-bound}
Suppose the data and their corresponding labels $\{\bx_i, y_i\}_{i=1}^n$ are i.i.d. copies of a random couple $(X, Y)$ with unknown distribution, $k$ is a positive semi-definite (PSD) kernel. Then with probability at least $1-\delta$, the generalization error of the kernel support vector machines \cite{cristianini-taylor2000-kernel-svm}, where $f(x) = \sum\limits_i \alpha_i k(\bx_i, x)$,  has the following upper bound for binary classification problems:
\begin{small}\begin{align}\label{eq:generalization-bound-svm}
&\Pr[\by f(\bx) \le 0] \le \hat \E_n [\varphi(\by f(\bx))] + \frac{4B}{ n\gamma} \sqrt{\sum\limits_{i=1}^n k(\bx_i,\bx_i)}
+ \Big(\frac{8}{\gamma}\Big) \sqrt{\frac{\log 4/{\delta}}{2n}},
\end{align}\end{small}%
where $\varphi$ is an upper bound function for the $0-1$ classification error, $\gamma$ is a constant specifying the classification margin, and $\sum\limits_{ij} \alpha_i \alpha_j k(\bx_i,\bx_i) \le B$.
\end{MyExample}
Bound (\ref{eq:generalization-bound-svm}) is a classical example of generalization bounds derived using Rademacher complexity \cite{Bartlett2003,Koltchinskii01} for statistical learning methods. In Example~\ref{example::generalization-bound}, $B$ is in fact an upper bound for the Rademacher complexity of the function class that $f$ belongs to. According to (\ref{eq:generalization-bound-svm}), $B$ is a regularization term added to the empirical loss $\hat \E_n [\varphi(\by f(\bx))]$. The famous convex optimization problem of kernel support vector machines can be viewed as minimization of the objective $\E_n [\varphi(\by f(\bx))] + \lambda B$ in the hope that the resultant classifier enjoys controlled generalization error.

When machine learning proceeds to the stage of deep learning wherein Deep Neural Networks (DNNs) are widely used models \cite{YannLecunNature05-DeepLearning}, the effort in finding the aforementioned regularization scheme for DNNs is not very rewarding. In contrast, other regularization schemes such as dropout \cite{SrivastavaHKSS14-dropout}, batch normalization \cite{IoffeS15-bn} and mixup \cite{Zhang2017-mixup} work well in practice. While recent works have employed Rademacher complexity based regularizer to learn the rates of dropout, such as \cite{Zhai2018-rademacher-dropout}, the results are on relatively simple network architecture. Therefore, the prediction accuracy on standard dataset such as CIFAR-$10$ is not as good as well-known network architecture, e.g. \cite{HeZRS16-resnet}. Rademacher complexity has also been utilized to derive generalization bounds for DNNs through Lipschitz constant of the networks \cite{BartlettFT17-dnn-bound}. Bounding Lipschitz constant of DNNs is also shown to boost their robustness to adversarial examples \cite{CisseBGDU17-Lipschitz-dnn-adversarial}.

On the other hand, the bounds derived using conventional Rademacher complexity are perceived as ``loose'' ones. This is largely due to the fact that the Rademacher complexity is derived for the entire function class that the classification function belongs to, and larger function class leads to larger Rademacher complexity. The generalization error of the classifier obtained by minimizing such bound may still relatively large. This is further confirmed by \cite{Zhang2017-rethinking}, which points out that regularizer based on Rademacher complexity for DNNs may be trivial. Due to the strong fitting capability of DNNs, DNNs can fit uniformly $\pm 1$-valued Rademacher variables, leading to the (empirical) Rademacher complexity of value $1$. Therefore, any upper bound for such Rademacher complexity is trivial.

To alleviate the problem, the statistics literature has developed Local Rademacher Complexity (LRC) \cite{bartlett2005,koltchinskii2006}, wherein Rademacher complexity of restricted function class is derived to bound the generalization error of either an arbitrary function in the entire function class \cite{bartlett2005}, or the minimizer of the empirical loss \cite{koltchinskii2006}. Intuitively, local Rademacher complexity is smaller than its global counterpart which measures the complexity of the entire function class, and the resultant error bound is also sharper.

In this paper, we propose a regularizer based on local Rademacher complexity of a ball centered at the minimizer of the empirical loss. This is inspired by the observation that bounding the local Rademacher complexity of a ball centered at the minimizer of the empirical loss improves generalization \cite{lugosi2004-localized-random-penalties}. Note that the development of this regularizer does not void the claim made in \cite{Zhang2017-rethinking}, since the Rademacher complexity is evaluated on a local ball instead of the entire function class.

\section{Notation}
Suppose the training data are $\cS = \{\bx_i,y_i\}_{i=1}^n$, $\cS$ are i.i.d. samples drawn from some unknown joint distribution $P_{XY}$ over the data $X \in \R^d$ and its class label $Y \in \left\{ {1,2,...,c} \right\}$. Let a deep neural network maps an input $\bx$ to a representation $h(\bx) \in \R^{c}$ upon which hinge loss or cross entropy loss is applied. Define the margin function $m_h(\bx,y) = h_{y}(\bx) - \max_{y' \ne y} h_{y'}(\bx)$, then the training instance $(\bx_i,y_i)$ is classified correctly if $m_h(\bx_i,y_i) \ge 0$. The hinge loss is ${\hat H_{n}(h)} = \frac{1}{n} \sum\limits_{i=1}^n {\Phi}(\frac{m_h(\bx_i,y_i)}{\gamma})$ where $\Phi$ is defined as
\begin{small}\begin{align}\label{eq:Phi}
\Phi(x) = \left\{
     \begin{array}{cl}
       1   &x < 0\\
       1-x &0 \le x \le 1 \\
       0   &x > 1. \\
     \end{array}
\right.
\end{align}\end{small}%

Similarly, let the softmax function in terms of the representation by $h$ be $\tilde h_y(\bx) = \frac{\exp{(h_y(\bx))}}{\sum\limits_{y'} \exp{(h_{y'}(\bx))}}$. The cross entropy loss function is defined as $\hat C_n (h) = \frac{1}{n} \sum\limits_{i=1}^n -\log {\tilde h_{y_i}(\bx_i)}$. The risk corresponding to cross entropy is then $C(f) = \E [-\log \tilde h_y(\bx)]$. Note that $C$ is the expectation of the negative logarithm of the softmax version of $h$.

\section{Regularization by Local Rademacher Complexity}

The Rademacher complexity \cite{Bartlett2003,Koltchinskii01} of a function class is defined below:
\begin{MyDefinition}\label{def:RC}
Let $\{\sigma_i\}_{i=1}^n$ be $n$ i.i.d. random variables such that $\Pr[\sigma_i = 1] = \Pr[\sigma_i = -1] = \frac{1}{2}$. The Rademacher complexity of a function class $\cH$ is defined as
\begin{small}\begin{align}\label{eq:RC}
&\cfrakR(\cH) = {\E}_{\{\sigma_i\},\{\bx_i\}}\left[\sup_{h \in \cH} {\frac{1}{n} \sum\limits_{i=1}^n {\sigma_i}{h(\bx_i)} } \right]
\end{align}\end{small}%
Its empirical version, i.e. the empirical Rademacher complexity, is defined as
\begin{small}\begin{align}\label{eq:empirical-RC}
&\hat \cfrakR(\cH) = {\E}_{\{\sigma_i\} }\left[\sup_{h \in \cH} {\frac{1}{n} \sum\limits_{i=1}^n {\sigma_i}{h(\bx_i)} } | \{\bx_i\} \right]
\end{align}\end{small}%
\end{MyDefinition}

Let $h_{\bw}(\bx)$ denote the feature mapping function of the neural network with explicit notation $\bw$ representing parameters of the neural network. The minimizer of the hinge loss is $\hat \bw = \argmin_{\bw} {\hat H_{n}(h)}$. The function class centered at the minimizer of the hinge loss is defined as $\cB_r = \{ {\Phi}(\frac{m_{h_{\bw}}(\bx, y)}{\gamma} \colon \bw \in B(\hat \bw, r) \}$ where $B(\bv, r)$ indicates an open ball centered at $\bv$ with radius $r > 0$. According to the definition of empirical Rademacher complexity (\ref{eq:empirical-RC}), the empirical Local Rademacher Complexity (LRC) of $\cB_r$ is

\begin{small}\begin{align}\label{eq:local-RC-margin}
&\hat \cfrakR^{(m)}(\hat \bw, r) = {\E}_{\{\sigma_i\}}\left[\sup_{ \bw \in B(\hat \bw, r) } {\frac{1}{n} \sum\limits_{i=1}^n {\sigma_i} {{\Phi}(\frac{m_{h_{\bw},y_i}(\bx_i)}{\gamma})} } | \{\bx_i,y_i\} \right]
\end{align}\end{small}%

We have the following theorem demonstrating the tight upper bound for $\hat \cfrakR^{(m)}(\hat \bw, r)$.

\begin{MyTheorem}\label{theorem::tight-bound-Rm}
Suppose that $h$ is a $L$-Lipschitz continuous function at $\hat \bw$, then
\begin{small}\begin{align}\label{eq:tight-bound-Rm}
&\hat \cfrakR^{(m)}(\hat \bw, r) \le \frac{1}{\gamma}\left|{\E}_{\{\sigma_i\}}\left[ \frac{1}{n} \sum\limits_{i=1}^n \sigma_i m_{h_{\hat \bw},y_i}(\bx_i)\right] \right| + \frac{3Lr}{\gamma}
\end{align}\end{small}%
\end{MyTheorem}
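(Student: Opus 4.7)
The plan is to combine three standard tools in sequence: the Ledoux--Talagrand contraction inequality to strip off the surrogate $\Phi$, a center-plus-perturbation decomposition of the supremum over $B(\hat\bw, r)$, and a pointwise Lipschitz bound on the margin inherited from that of $h$. The two summands in (\ref{eq:tight-bound-Rm}) will arise respectively from the value at $\hat\bw$ and from the perturbation.

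First I would apply contraction. The ramp $\Phi$ in (\ref{eq:Phi}) has slopes in $\{0,-1,0\}$ and is therefore $1$-Lipschitz, so $\Phi(\cdot/\gamma)$ is $(1/\gamma)$-Lipschitz in its argument. Invoking the Rademacher contraction principle in (\ref{eq:local-RC-margin}) removes $\Phi$ at the cost of a prefactor $1/\gamma$, yielding
\begin{equation*}
\hat\cfrakR^{(m)}(\hat\bw, r) \le \frac{1}{\gamma}\,\E_{\{\sigma_i\}}\!\left[\sup_{\bw \in B(\hat\bw, r)} \frac{1}{n}\sum_{i=1}^n \sigma_i\, m_{h_\bw, y_i}(\bx_i)\right].
\end{equation*}
I would then add and subtract the value at $\hat\bw$. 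Setting $\Delta_i(\bw) := m_{h_\bw, y_i}(\bx_i) - m_{h_{\hat\bw}, y_i}(\bx_i)$, the piece $\frac{1}{n}\sum_i \sigma_i m_{h_{\hat\bw}, y_i}(\bx_i)$ is independent of $\bw$ and slides outside the supremum; after taking $\E_\sigma$ and passing to absolute value, this contributes precisely the first term on the right of (\ref{eq:tight-bound-Rm}).

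What remains is to bound $\frac{1}{\gamma}\,\E_\sigma\!\sup_{\bw \in B(\hat\bw,r)}\frac{1}{n}\sum_i \sigma_i \Delta_i(\bw)$. The $L$-Lipschitz property of $h$ at $\hat\bw$ implies that each coordinate of $h_\bw(\bx_i)$ varies by at most $Lr$ as $\bw$ ranges over $B(\hat\bw, r)$; combined with the $1$-Lipschitzness of $\max_{y'\ne y}$ and the triangle inequality, this yields $|\Delta_i(\bw)| \le 2Lr$. Since $|\sigma_i|=1$, the empirical Rademacher average is then dominated pointwise by $\frac{1}{n}\sum_i |\Delta_i(\bw)| \le 2Lr$, so after dividing by $\gamma$ the perturbation contributes a term of order $Lr/\gamma$. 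The main obstacle is pinning down the precise constant $3$ in $3Lr/\gamma$: it depends on whether one uses the sharp or the factor-$2$ Ledoux--Talagrand form of the contraction, on whether $L$ bounds $h_\bw$ coordinatewise or in $\ell_\infty$, and on how conservatively one bounds the difference of two maxima. The structural content of the theorem---one Rademacher term evaluated exactly at the empirical minimizer plus a perturbation of order $Lr/\gamma$---is transparent from the decomposition and does not depend on these conventions.
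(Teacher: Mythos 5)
Your proposal follows essentially the same route as the paper's own proof: contraction to strip $\Phi$ at cost $1/\gamma$, adding and subtracting the term at $\hat \bw$, and a pointwise Lipschitz bound on $m_{h_{\bw},y_i}(\bx_i) - m_{h_{\hat \bw},y_i}(\bx_i)$ over $B(\hat \bw, r)$. Your constant $2Lr$ for the perturbation (from $Lr$ for the $y$-coordinate plus $Lr$ for the difference of maxima) is in fact tighter than the paper's asserted $3Lr$, and since $2Lr \le 3Lr$ the stated inequality (\ref{eq:tight-bound-Rm}) follows, so there is no gap.
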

\begin{proof}
By the contraction property of Rademacher complexity \cite{LedouxTal91-probability},
\begin{small}\begin{align}\label{eq:tight-bound-Rm-seg1}
&\hat \cfrakR^{(m)}(\hat \bw, r) \le \frac{1}{\gamma} \left | {\E}_{\{\sigma_i\}}\left[\sup_{ \bw \in B(\hat \bw, r) } {\frac{1}{n} \sum\limits_{i=1}^n {\sigma_i} m_{h_{\bw},y_i}(\bx_i) } | \{\bx_i,y_i\} \right] \right| \nonumber \\
&\le \frac{1}{\gamma} \left|{\E}_{\{\sigma_i\}} \left[ \frac{1}{n} \sum\limits_{i=1}^n \sigma_i m_{h_{\hat \bw},y_i}(\bx_i)\right] \right| + \frac{1}{\gamma} \left | {\E}_{\{\sigma_i\}}\left[\sup_{ \bw \in B(\hat \bw, r) } \frac{1}{n} \sum\limits_{i=1}^n {\sigma_i} \big( m_{h_{\bw},y_i}(\bx_i) - m_{h_{\hat \bw},y_i}(\bx_i) \big) | \{\bx_i,y_i\} \right] \right| \nonumber \\
&\le \frac{1}{\gamma} \left|{\E}_{\{\sigma_i\}}\left[ \frac{1}{n} \sum\limits_{i=1}^n \sigma_i m_{h_{\hat \bw},y_i}(\bx_i)\right] \right| +  \frac{1}{\gamma} {\E}_{\{\sigma_i\}}\left[\sup_{ \bw \in B(\hat \bw, r) } \frac{1}{n} \sum\limits_{i=1}^n \left|{\sigma_i} \big( m_{h_{\bw},y_i}(\bx_i) - m_{h_{\hat \bw},y_i}(\bx_i) \big)\right| | \{\bx_i,y_i\} \right]  \nonumber \\
&\le \frac{1}{\gamma} \left|{\E}_{\{\sigma_i\}}\left[ \frac{1}{n} \sum\limits_{i=1}^n \sigma_i m_{h_{\hat \bw},y_i}(\bx_i)\right] \right| +  \frac{3Lr}{\gamma}.
\end{align}\end{small}%
The last inequality is due to the fact that $\left| m_{h_{\bw},y_i}(\bx_i) - m_{h_{\hat \bw},y_i}(\bx_i) \right| \le 3Lr$ when $\bw \in B(\hat \bw, r)$.
\end{proof}

Define $R(\bw) = |{\E}_{\{\sigma_i\}}\left[ \sigma_i m_{h_{\bw}}(\bx_i,y_i)\right]|$, then $R(\hat \bw)$ can serve as an approximate upper bound for $\hat \cfrakR^{(m)}(\hat \bw, r)$ when $r \to 0$. In order to bound the LRC $\hat \cfrakR^{(m)}(\hat \bw, r)$, we propose to minimize a regularized hinge loss, i.e. ${\hat H_{n}(h_{\bw})} + \lambda R(\bw)$, where $\lambda > 0$ is a weighting parameter. The intuition is that in practical scenarios, the regularized loss can achieve a very small value at the end of training, and the minimizer of the regularized loss can be a good approximation of the minimizer of ${\hat H_{n}(h_{\bw})}$ with bounded LRC of $\cB_r$ around it.

When it comes to the cross entropy loss, the corresponding LRC can be defined as follows in a manner similar to the case of hinge loss:

\begin{small}\begin{align}\label{eq:local-RC-cross-entropy}
&\hat \cfrakR^{c}(\hat \bw, r) = {\E}_{\{\sigma_i\}}\left[\sup_{ \bw \in B(\hat \bw, r) } {\frac{1}{n} \sum\limits_{i=1}^n {\sigma_i} \cdot {-\log \tilde h_{\bw, y_i}(\bx_i)} } | \{\bx_i,y_i\} \right]
\end{align}\end{small}%

The following theorem demonstrating the tight upper bound for $\hat \cfrakR^{(m)}(\hat \bw, r)$.
\begin{MyTheorem}\label{theorem::tight-bound-Rc}
Suppose that $h$ is a $L$-Lipschitz continuous function at $\hat \bw$, then
\begin{small}\begin{align}\label{eq:tight-bound-Rc}
&\hat \cfrakR^{c}(\hat \bw, r) \le \sqrt{2(c-1)} \left | {\E}_{\{\sigma_{ij}\}} \left[ \frac{1}{n}\sum\limits_{i=1}^n\sum\limits_{j \neq y_i} \sigma_{ij} (h_{\hat \bw,j}(\bx_i) - h_{\hat \bw_i,y_i}(\bx)) \right] \right|  + 2\sqrt{2(c-1)}{(c-1)Lr}
\end{align}\end{small}%
where ${\hat H_{n}(f)} = \frac{1}{n}  \sum\limits_{i=1}^n \Phi \Big(\frac{m_h(\bx_i,y_i)}{\gamma}\Big)$ is the empirical error of $f$ on the labeled data.
\end{MyTheorem}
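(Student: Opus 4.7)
The plan is to mirror the structure of the proof of Theorem~\ref{theorem::tight-bound-Rm}: contract the nonlinear loss into a Rademacher average of the raw network outputs, decompose at $\hat\bw$, and bound the residual by $L$-Lipschitz continuity. The twist is that the cross-entropy loss is a function of the whole $c$-dimensional output vector rather than a single scalar margin, so a vector-valued contraction must take the place of the scalar Talagrand contraction used in the hinge-loss case.

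First, I would rewrite the loss in log-sum-exp form as $-\log \tilde h_{\bw,y_i}(\bx_i) = g(\mathbf{u}_i(\bw))$, where $u_{ij}(\bw) := h_{\bw,j}(\bx_i)-h_{\bw,y_i}(\bx_i)$ for $j\neq y_i$ collects the $c-1$ non-label score gaps into a vector $\mathbf{u}_i(\bw)\in\R^{c-1}$, and $g(\bv):=\log(1+\sum_j e^{v_j})$. The partial derivatives of $g$ are precisely the non-label softmax weights, so $\|\nabla g\|_\infty\le 1$ and hence $\|\nabla g\|_2\le \sqrt{c-1}\,\|\nabla g\|_\infty\le \sqrt{c-1}$; this is the (intentionally loose) Euclidean Lipschitz constant that will supply the $\sqrt{2(c-1)}$ prefactor in the bound.

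With this in place I would invoke the vector-valued analogue of the contraction principle of \cite{LedouxTal91-probability} to obtain
\begin{align*}
\hat\cfrakR^c(\hat\bw,r)\;\le\;\sqrt{2(c-1)}\,{\E}_{\{\sigma_{ij}\}}\sup_{\bw\in B(\hat\bw,r)}\frac{1}{n}\sum_{i=1}^n\sum_{j\neq y_i}\sigma_{ij}\,u_{ij}(\bw).
\end{align*}
Following the decomposition in the proof of Theorem~\ref{theorem::tight-bound-Rm}, I would write $u_{ij}(\bw) = u_{ij}(\hat\bw) + (u_{ij}(\bw)-u_{ij}(\hat\bw))$, pull the $\hat\bw$-part out of the supremum, and apply the triangle inequality; this produces the first, reference term of the claimed bound. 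For the residual, $L$-Lipschitz continuity of $h$ together with $\bw\in B(\hat\bw,r)$ gives $|u_{ij}(\bw)-u_{ij}(\hat\bw)|\le |h_{\bw,j}(\bx_i)-h_{\hat\bw,j}(\bx_i)|+|h_{\bw,y_i}(\bx_i)-h_{\hat\bw,y_i}(\bx_i)|\le 2Lr$, and summing the $n(c-1)$ such contributions against the $\sqrt{2(c-1)}$ prefactor yields exactly $2\sqrt{2(c-1)}(c-1)Lr$.

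The hardest step is the vector contraction: the scalar Talagrand lemma that sufficed for Theorem~\ref{theorem::tight-bound-Rm} cannot be applied here because log-sum-exp is genuinely multivariate in the outputs, so one must both invoke a vector-contraction inequality and explicitly commit to a Lipschitz constant for $g$. Once the loose Euclidean-Lipschitz bound $\sqrt{c-1}$ is adopted, the remaining reference/residual split and the $2Lr$ perturbation bound are essentially bookkeeping that parallels the hinge-loss argument.
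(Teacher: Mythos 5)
Your proposal follows essentially the same route as the paper's proof: rewrite the cross-entropy loss as log-sum-exp of the $c-1$ score gaps, bound its Euclidean Lipschitz constant by $\sqrt{c-1}$, apply the vector-contraction inequality (the paper cites Maurer's version rather than a Ledoux--Talagrand analogue, but it is the same key lemma with the same $\sqrt{2}$ factor), and then reuse the reference/residual decomposition from the hinge-loss theorem with a $2Lr$ per-term perturbation bound. The argument and all constants match; no gap.
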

\begin{proof}
Define function $r(\bv) = \log(1 + \sum\limits_{j=1}^m \exp(\bv_j) )$ for $\bv \in \R^m$. According to the mean value theorem, with $\bz = \tau \bx + (1-\tau)\by$ for some $\tau \in (0,1)$,
\begin{small}\begin{align}\label{eq:tight-bound-Rc-seg2}
&|r(\bx) - r(\by)| = |\nabla^{\top} r (\bz) (\bx - \by)| \le \|\nabla^{\top} r (\bz)\|_2 \|\bx-\by\|_2
\le \sqrt{m} \|\bx-\by\|_2.
\end{align}\end{small}%
Therefore, $r$ is a $\sqrt{m}$-Lipschitz continuous function. Let $\cT$ be a class of functions $t \colon \R^d \to \R^m$. Based on the vector-contraction inequality for Rademacher complexity \cite{Maurer2016-vector-rademacher}, we have
\begin{small}\begin{align}\label{eq:tight-bound-Rc-vector-rad}
{\E}_{\{\sigma_i\}} \left[ \sup_{t \in \cT} \sum\limits_{i=1}^n \sigma_i r(t(\bx_i)) \right] \le
\sqrt{2m} {\E}_{\{\sigma_{ij}\}} \left[ \sup_{t \in \cT} \sum\limits_{i=1}^n\sum\limits_{j=1}^m \sigma_{ij} t_j(\bx_i) \right],
\end{align}\end{small}%
where $t_j$ is the $j$-th component of $t$, $\{\sigma_{ij}\}$ are independent doubly indexed Rademacher variables.

On the other hand, the negative logarithm of softmax function appearing in (\ref{eq:local-RC-cross-entropy}) can be rewritten as
\begin{small}\begin{align*}
&-\log \tilde h_{\bw, y} (\bx) =  \log \frac{\sum\limits_{y'} \exp{(h_{\bw, y'}(\bx))}}{\exp{(h_{\bw,y}(\bx))}}
= \log \big(1 + \sum\limits_{y': y' \neq y} \exp(h_{\bw,y'}(\bx) - h_{\bw,y}(\bx)) \big) \nonumber \\
\end{align*}\end{small}%
Applying (\ref{eq:tight-bound-Rc-vector-rad}) with $r = -\log \tilde h_{\bw, y}$ and $m=c-1$, by the definition of $\hat \cfrakR^{c}(\hat \bw, r)$ in (\ref{eq:local-RC-cross-entropy}), we have
\begin{small}\begin{align}\label{eq:tight-bound-Rc-seg3}
&\hat \cfrakR^{c}(\hat \bw, r) = {\E}_{\{\sigma_i\}}\left[\sup_{ \bw \in B(\hat \bw, r) } {\frac{1}{n} \sum\limits_{i=1}^n {\sigma_i} \cdot {-\log \tilde h_{\bw, y_i}(\bx_i)} } | \{\bx_i,y_i\} \right] \nonumber \\
&\le \sqrt{2(c-1)} {\E}_{\{\sigma_{ij}\}} \left[ \sup_{ \bw \in B(\hat \bw, r) } \frac{1}{n} \sum\limits_{i=1}^n\sum\limits_{j \neq y_i} \sigma_{ij} (h_{\bw,j}(\bx_i) - h_{\bw,y_i}(\bx)) \right].
\end{align}\end{small}%
Then (\ref{eq:tight-bound-Rc}) can be proved by (\ref{eq:tight-bound-Rc-seg3}) and argument similar to (\ref{eq:tight-bound-Rm-seg1}) in the proof of Theorem~\ref{theorem::tight-bound-Rc}.
\end{proof}

The tightness of (\ref{eq:tight-bound-Rm}) and (\ref{eq:tight-bound-Rc}) can be observed by letting $r \to 0$ and noting that $h$ is locally linear when the corresponding neural network uses ReLU as activation function. Algorithm~\ref{alg:LRC-Rm} and Algorithm~\ref{alg:LRC-Rc} describe the process of training neural networks with LRC regularization for hinge loss and cross entropy respectively. Note that the regularization term $R$ is computed according to the upper bound for LRC (\ref{eq:tight-bound-Rm}) and (\ref{eq:tight-bound-Rc}) with $r \to 0$. In addition, the computation of $R$ in both algorithms is simple and efficient without introducing noticeable computational burden.
\begin{algorithm}[!h]
\renewcommand{\algorithmicrequire}{\textbf{Input:}}
\renewcommand\algorithmicensure {\textbf{Output:} }
\small
\caption{Training Deep Neural Networks with Hinge Loss and Regularization by Local Rademacher Complexity}
\label{alg:LRC-Rm}
\begin{algorithmic}[1]
\REQUIRE ~~\\
The training data $\{\bx_i,y_i\}_{i=1}^{n}$, the neural network $h$, the regularization weight $\lambda$, $K \in \N$\\
\STATE \FOR{each epoch}
\FOR{minibatch $\cB = \{\bx_i, y_i\}_{i=1}^B$}
\STATE{$R = 0$.}
\FOR{$k \gets 1$ to $K$}
\STATE {Sample Rademacher variables $\{\sigma_i\}_{i=1}^B$.}
\STATE {$R \gets R + \frac{1}{B} \left| \sum\limits_{i=1}^B {\sigma_i} {m_h(\bx_i,y_i)} \right|$}
\ENDFOR
\STATE{$R \gets \frac{R}{K}$}
\STATE{Do one step gradient descent on $$L(\cB) = \frac{1}{n}  \sum\limits_{i=1}^B \Phi \Big(\frac{m_h(\bx_i,y_i)}{\gamma}\Big) + \lambda R$$}
\ENDFOR
\ENDFOR
\ENSURE the trained neural network
\end{algorithmic}
\end{algorithm}

\begin{algorithm}[!htb]
\renewcommand{\algorithmicrequire}{\textbf{Input:}}
\renewcommand\algorithmicensure {\textbf{Output:} }
\small
\caption{Training Deep Neural Networks with Cross Entropy and Regularization by Local Rademacher Complexity}
\label{alg:LRC-Rc}
\begin{algorithmic}[1]
\REQUIRE ~~\\
The training data $\{\bx_i,y_i\}_{i=1}^{n}$, the neural network $h$, the regularization weight $\lambda$, $K \in \N$\\
\STATE \FOR{each epoch}
\FOR{minibatch $\cB = \{\bx_i, y_i\}_{i=1}^B$}
\STATE{$R = 0$.}
\FOR{$k \gets 1$ to $K$}
\STATE {Sample Rademacher variables $\{\sigma_{ij}\}_{1 \le i\le B, 1 \le j\le c-1}$.}
\STATE {$R \gets R + \frac{1}{Bc} \left|  \sum\limits_{i=1}^B\sum\limits_{j \neq y_i} \sigma_{ij} (h_{\bw,j}(\bx_i) - h_{\bw_i,y_i}(\bx)) \right|$}
\ENDFOR
\STATE{$R \gets \frac{R}{K}$}
\STATE{Do one step gradient descent on $$L(\cB) = \frac{1}{n} \sum\limits_{i=1}^n -\log {\tilde h_{y_i}(\bx_i)} + \lambda R$$}
\ENDFOR
\ENDFOR
\ENSURE the trained neural network
\end{algorithmic}
\end{algorithm}

\section{Experimental Result}

\subsection{Experiment on ResNet}
We conduct experiments on the CIFAR-$10$ dataset \cite{Krizhevsky2009-cifar} in this subsection to demonstrate that the proposed regularization by LRC helps improve generalization of standard network architecture, i.e. Residual Network (ResNet) \cite{HeZRS16-resnet}. The CIFAR-$10$ dataset has $50000$ training images and $10000$ test images with $10$ classes. We hold out $5000$ images randomly chosen from the original training images as validation set, and the validation set is used to choose the regularization weight $\lambda$ from $\{0.1,0.5,1\}$.  We then train ResNet-$18$ with the chosen $\lambda$ on the original training set and evaluate the resultant model on the test set. The empirical loss is set to either hinge loss or cross entropy.

Both choices for empirical loss favor $\lambda=0.5$, and the corresponding test loss and test accuracy is shown in Table~\ref{table:validation-lambda-cifar}. The baseline neural network does not have LRC regularization, and other than that it has the same specification as the one with LRC regularization in every aspect. The average of test loss and test accuracy in the last $5$ epoches of the training process are reported. We can observe that the test loss is lower than that of the baseline. In addition, cross entropy enjoys more gain in accuracy by LRC regularization. The test loss and test error of ResNet-$18$ with respect to epoch number for cross entropy are shown in Figure~\ref{fig:lrc-cifar10-loss} and Figure~\ref{fig:lrc-cifar10-error} respectively. The training procedure is the same as that stated in \cite{HeZRS16-resnet}. $164$ epoches are used for training. The initial learning rate is $0.1$, and it is divide it by $10$ at $82$ and $123$ epoches respectively. Again, it is observed that the test loss of LRC regularization is consistently lower than that of the baseline, and its test error is also smaller accordingly.


\begin{table*}[!htb]
\centering
\scriptsize
\caption{\small Test loss and test accuracy on the CIFAR-$10$ dataset}
\begin{tabular}{|c|c|c|c|c|c|c|c|c|c|c|}
  \hline

   \backslashbox{Test}{Loss} &Hinge Loss   &Cross Entropy        \\\hline

   Test Loss: LRC(Baseline)       &0.128(0.153)  &0.123(0.161)         \\ \hline

   Test Accuracy: LRC(Baseline)    &94.16\%(94.13\%)  &94.34\%(93.95\%)        \\ \hline

\end{tabular}
\label{table:validation-lambda-cifar}
\end{table*}

%
%
%
%
%
%

%
%
%
%
%
%

\begin{minipage}{0.45\textwidth}
\begin{figure}[H]
\includegraphics[width=0.8\textwidth]{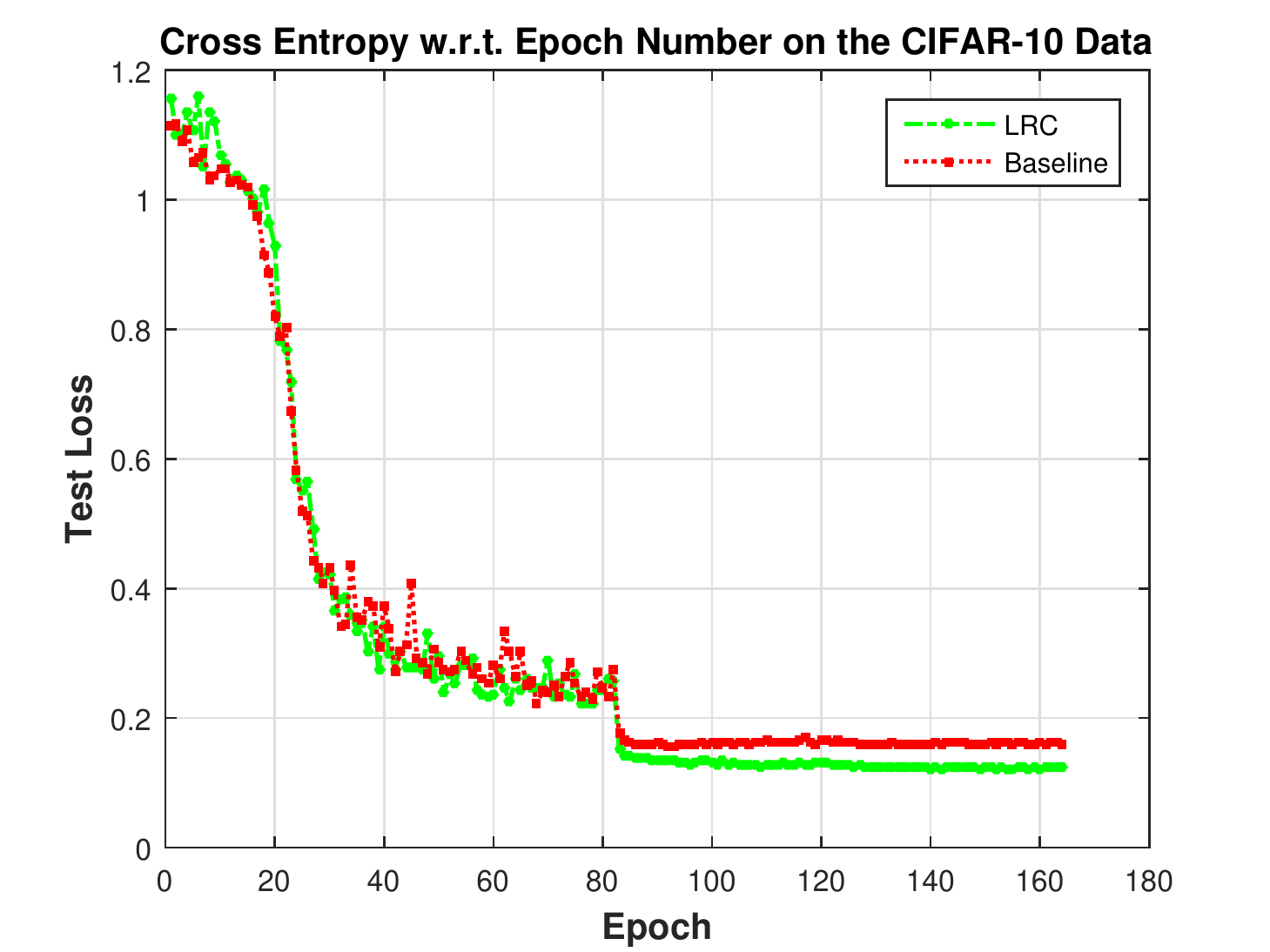}
\caption{\scriptsize Illustration of test loss on the CIFAR-$10$ dataset with $\lambda=0.5$}
\label{fig:lrc-cifar10-loss}
\end{figure}
\end{minipage}
\begin{minipage}{0.45\textwidth}
\begin{figure}[H]
\includegraphics[width=0.8\textwidth]{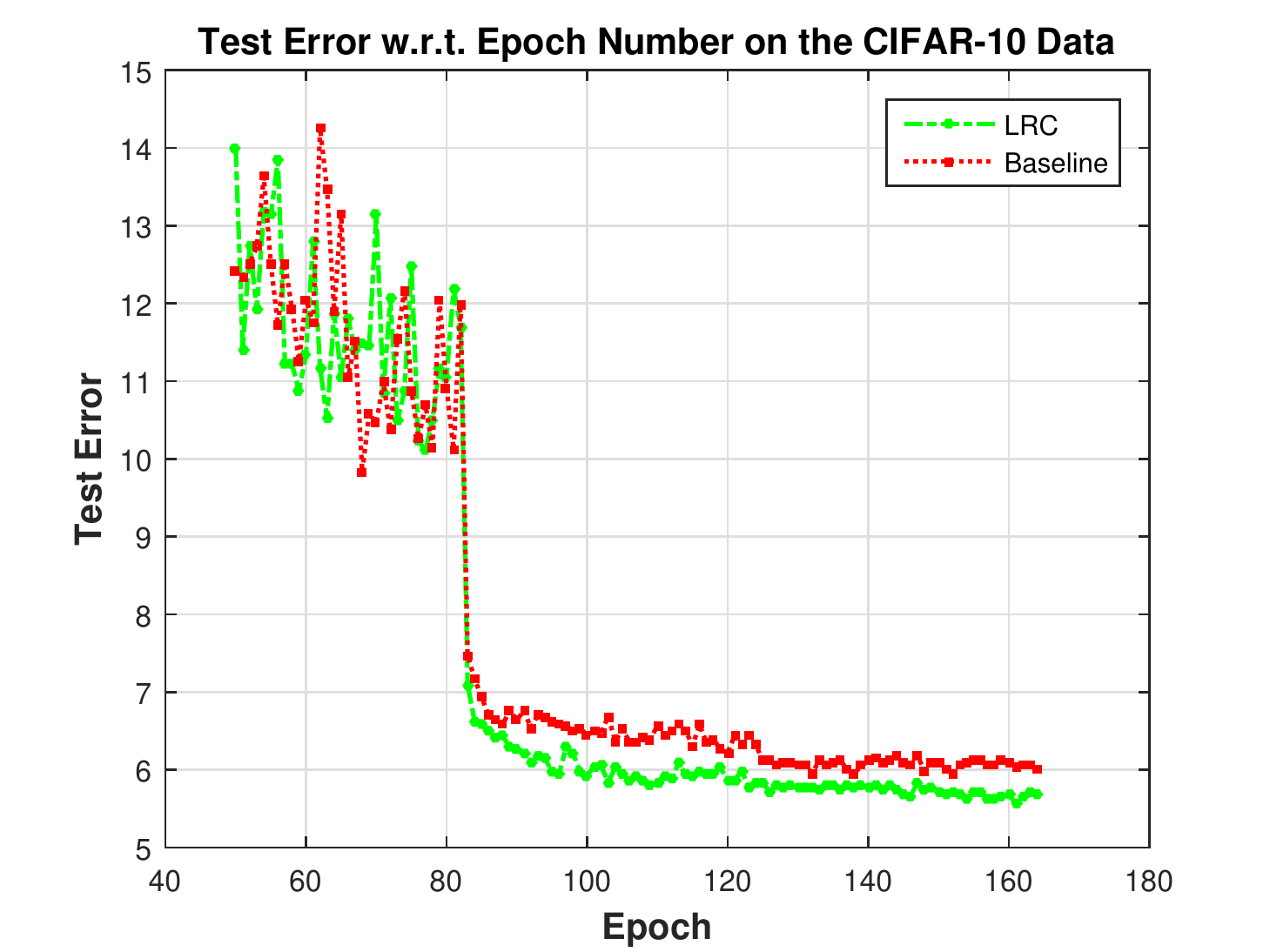}
\caption{\scriptsize Illustration of test error on the CIFAR-$10$ dataset with $\lambda=0.5$}
\label{fig:lrc-cifar10-error}
\end{figure}
\end{minipage}

\begin{table*}[!htb]
\centering
\scriptsize
\caption{\small Classification accuracy on CIFAR-$10$ with different regularization weight $\lambda$}
\begin{tabular}{|c|c|c|c|c|c|c|c|}
  \hline

   \backslashbox{$\lambda$}{Model}       &$M_1$   &$M_2$     &$M_3$       &$M_4$      &$M_5$      &$M_6$   &$M_7$    \\\hline

   $\lambda=0$     &0.9689  &0.9711    &0.9667      &0.9686     &0.9689     &0.9678  &0.9742     \\ \hline

   $\lambda=0.1$   &0.9670  &0.9702    &0.9694      &0.9693     &0.9704     &0.9668  &0.9742     \\ \hline

   $\lambda=0.3$   &0.9688  &0.9719    &0.9688      &0.9708     &0.9701     &0.9679  &0.9742     \\ \hline

   $\lambda=0.5$   &0.9697  &0.9723    &0.9683      &0.9690     &0.9718     &0.9674  &0.9743     \\ \hline

   $\lambda=0.7$   &0.9706  &0.9710    &0.9672      &0.9697     &0.9702     &0.9675  &0.9744     \\ \hline

\end{tabular}
\label{table:lambda-DARTS-LRC}
\end{table*}
\begin{table*}[ht]
\centering
\scriptsize
\caption{\small Classification accuracy on CIFAR-$10$ with different regularization}
\begin{tabular}{|c|c|c|c|c|c|c|c|}
  \hline

   \backslashbox{Regularization}{Model}
                   &$M_1$   &$M_2$     &$M_3$       &$M_4$      &$M_5$      &$M_6$   &$M_7$    \\\hline

   None            &0.9689  &0.9711    &0.9667      &0.9686     &0.9689     &0.9678  &0.9742     \\ \hline

   mixup           &0.9741  &0.9733    &0.9740      &0.9740     &0.9742     &0.9713  &0.9789     \\ \hline

   LRC             &0.9691  &0.9719    &0.9691      &0.9690     &0.9707     &0.9680  &0.9744     \\ \hline

   mixup$+$LRC     &0.9712  &0.9734    &0.9731      &0.9752     &0.9730     &0.9732  &\textbf{0.9801}     \\ \hline

\end{tabular}
\label{table:lambda-mixup-LRC}
\end{table*}

\subsection{Experiment on Networks Obtained by Neural Architecture Search}
We evaluate the effect of LRC Regularization on more complex models found by the recent state-of-the-art neural architecture search algorithm, DARTS \cite{Liu2019-darts}, on the CIFAR-$10$ dataset. We obtain three types of neural architecture by running DARTS three times. The three types of architecture of normal cells and reduction cells are illustrated at Figure~\ref{fig:lrc-model-arch}. Each type of architecture is trained with five different initial learning rates, which are $0.01$, $0.015$, $0.02$, $0.025$ and $0.03$. The learning rates are gradually reduced to zero following a cosine schedule. We use SGD with momentum of $0.9$ to optimize the weights. The weight decay is set to $0.0002$. Each model is trained for $600$ epochs with a mini-batch size of $96$. We then randomly select six models from all the fifteen models, denoted by $\{M_i\}_{i=1}^6$. We further perform majority voting on the six models and denote the ensemble model by $M_7$. The effect of LRC regularization with different regularization weight $\lambda$ on the six models is shown in Table~\ref{table:lambda-DARTS-LRC}. Models with LRC regularization outperform the original DARTS models in most cases, and the performance of LRC regularization is not sensitive with respect to $\lambda$. The ensemble model $M_7$ leads to further improvement on the accuracy.

\begin{figure}[htbp]
\centering
\subfigure{
\begin{minipage}[t]{0.3\linewidth}
\centering
\includegraphics[width=2.5in]{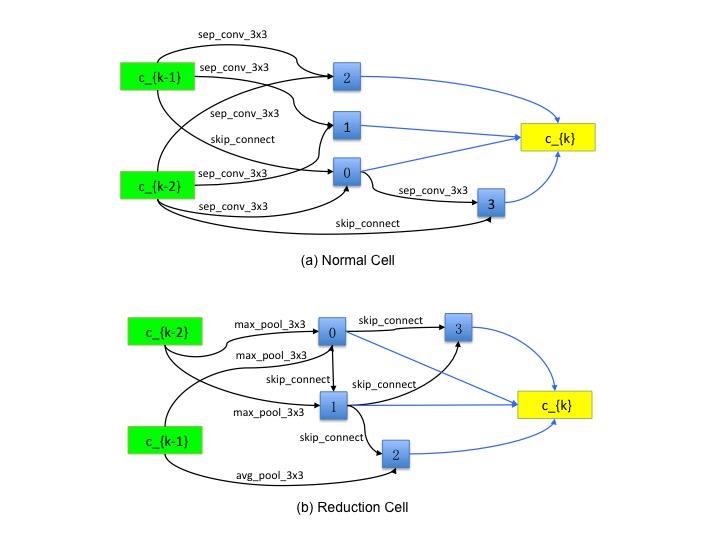}
\end{minipage}%
}%
\subfigure{
\begin{minipage}[t]{0.3\linewidth}
\centering
\includegraphics[width=2.5in]{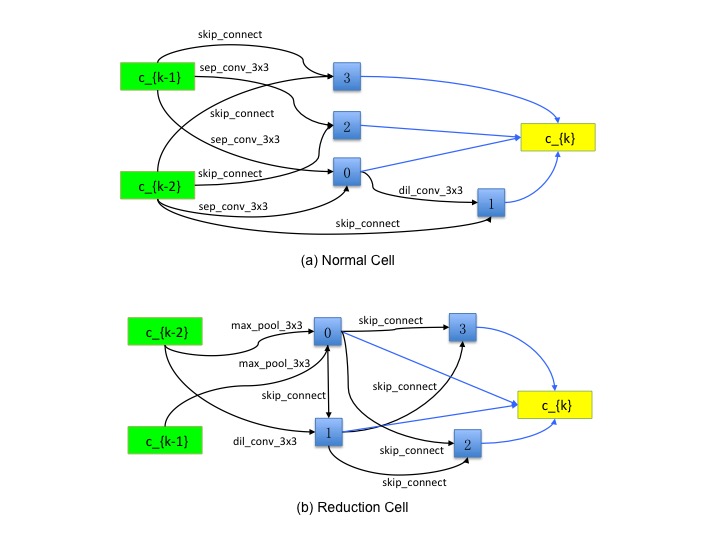}
\end{minipage}%
}%
\subfigure{
\begin{minipage}[t]{0.3\linewidth}
\centering
\includegraphics[width=2.5in]{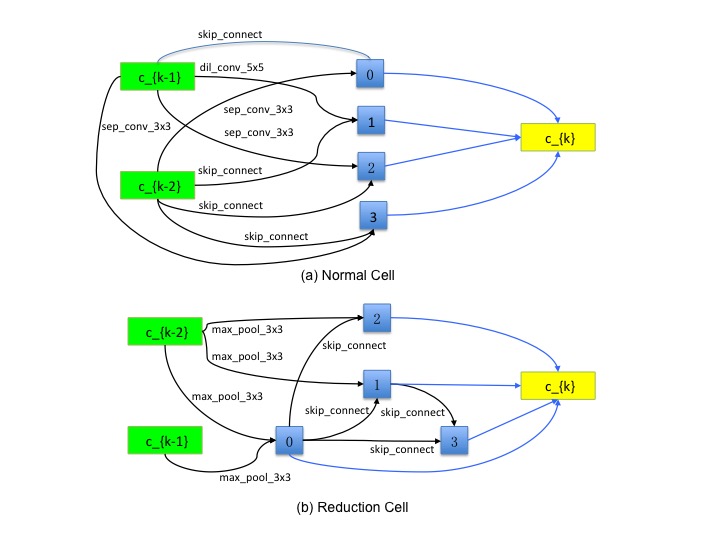}
\end{minipage}
}%
\centering
\caption{Three types of architecture found by DARTS}
\label{fig:lrc-model-arch}
\end{figure}

We also analyze the relationship between LRC regularization and mixup regularization on the same model. The best mixup coefficient $\alpha$ is selected for each model using cross-validation, then the mixup model is trained using all the training data. For the combination of mixup and LRC, we use the same $\alpha$ as that in the corresponding mixup experiment and a default value of $0.5$ for $\lambda$.

Table~\ref{table:lambda-mixup-LRC} demonstrates that the combination of LRC and mixup has performance comparable to that of only using mixup for $\{M_i\}_{i=1}^6$. However, the ensemble model $M_7$ considerably benefits from the combination of LRC and mixup. It should be emphasized that the ensemble model achieves the state-of-the-art accuracy on this dataset with the combination of LRC and mixup, $98.01\%$, to the best of our knowledge.

\section{Conclusion}
In this paper, we propose to improve the generalization capability of Deep Neural Networks (DNNs) by regularization through Local Rademacher Complexity (LRC). In contrast with its global counterpart, i.e. Rademacher complexity, LRC is estimated on a local ball centered at the minimizer of empirical loss. Therefore, the bound for LRC can be much smaller than that for Rademacher complexity and LRC has been proved to enjoy sharper generalization error bound by previous study. We develop regularization of DNNs by LRC for both hinge loss and cross entropy, and its effectiveness is demonstrated by empirical study with residual network and networks obtained by neural architecture search.

\section{Model and Software Release}
The related models using both PaddlePaddle and PyTorch frameworks are available at \url{https://paddlemodels.bj.bcebos.com/autodl/fluid_rademacher.tar.gz}. The open source PaddlePaddle code could be downloaded from \url{https://github.com/PaddlePaddle/AutoDL/tree/master/LRC}.

\section{Acknowledgement}
We would like to express our gratitude to Guanzhong Wang at Baidu Inc. for his great efforts in implementing our algorithm by PaddlePaddle and releasing the PaddlePaddle code and models based on the LRC regularization described in this paper.

\bibliography{mybib}
\bibliographystyle{unsrt}


\end{document}